\newtheorem{lemma}{Lemma}
\newtheorem{proposition}{Proposition}
\begin{document}
\title{Exploiting an Oracle that Reports AUC Scores\\in Machine Learning Contests}
\author{Jacob Whitehill\\
Office of the Vice Provost for Advances in Learning\\
Harvard University\\
\begin{tt}jacob\_whitehill@harvard.edu\end{tt}
}
\maketitle
\begin{abstract}
\begin{quote}
In machine learning contests such as the ImageNet Large Scale Visual Recognition Challenge \cite{ImageNet2015} and the KDD Cup, contestants can submit candidate solutions 
and receive from an oracle (typically the organizers of the competition) the accuracy of their guesses compared to
the ground-truth labels. One of the most commonly used accuracy metrics for binary classification tasks is
the Area Under the Receiver Operating Characteristics Curve (AUC). In this paper we provide proofs-of-concept of
how knowledge of the AUC of a set of guesses can be used, in two different kinds of attacks, to improve the accuracy
of those guesses.  On the other hand, we also
demonstrate the intractability of one kind of AUC exploit by proving
that the number of possible binary labelings of $n$ examples for which a candidate solution obtains a AUC score of $c$
grows exponentially in $n$, for every $c\in (0,1)$.
\end{quote}
\end{abstract}

\section{Introduction}
Machine learning and data-mining competitions such as the ImageNet Large Scale Visual Recognition Challenge \cite{ImageNet2015}, KDD Cup \cite{KDDCup2015},
and Facial Expression Recognition and Analysis (FERA) Challenge \cite{ValstarEtAl2015} have helped to advance the state-of-the-art of
machine learning, deep learning, and computer vision research. By establishing common benchmarks and setting a clearer boundary 
between training and testing datasets -- participants typically never gain access to the testing labels directly -- these competitions
help researchers to estimate the performance of their classifiers more reliably. However, the benefit of such contests depends on
the integrity of the competition and the generalizability of performance to real-world contexts.
If contestants could somehow ``hack'' the competition to learn illegitimately the
labels of the test set and increase their scores, then the value of the contest would diminish greatly.

One potential window that data-mining contestants could exploit is the accuracy ``oracle'' set up by the competition
organizers to give contestants a running estimate of their classifier's performance: In many data-mining contests (e.g., those organized
by Kaggle), contestants may submit, up to a fixed maximum number of times per day, a set of guesses for the labels in the testing set.
The oracle will then reply with the accuracy of the contestant's guesses on a (possibly randomized) subset of the test data.
There are several ways in which these oracle answers can be exploited, including: (1)
If some contestants were able to circumvent the daily maximum number of accuracy queries they can submit to the oracle
(e.g., by illegally registering multiple accounts on the competition's website),
then they could use those extra oracle responses to perform additional parameter or hyperparameter optimization over the \emph{test set}
and potentially gain a competitive edge. (2) The accuracy reported by the oracle, even though it is an aggregate performance metric over the entire 
test set (or a large subset), could convey information about the labels of \emph{individual examples} in the test set. Contestants could use this information to refine
their guesses about the test labels. Given that the difference in accuracy between contestants is often tiny -- 
in KDD Cup 2015, for example, the \#1 and \#2 contestants' accuracies differed by $0.0003$ AUC \cite{KDDCup2015} --
even a small edge achieved by exploiting the AUC oracle can be perceived as worthwhile.

To date, attacks of type (1) have already been implemented and documented \cite{Simonite2015}.
However, to the best of our knowledge, attacks of type (2) have not previously been investigated.
In this paper, we consider how an attacker could orchestrate attacks of type (2) on an oracle that reports
the Area Under the Receiver Operating Characteristics Curve (AUC), which is one of the most common performance metrics for binary classifiers.
Specifically, we make the following {\bf contributions}:
\begin{itemize}
\item[(a)] We describe an attack whereby an attacker whose classifier already achieves high AUC 
and who knows the prevalence of each class can use the oracle to
infer the labels of a few test examples with complete certainty.
\item[(b)] We provide a proof-of-concept of how the AUC score $c$ of a set of guesses
constrains the set of possible labelings of the entire test set, and how this information can be harnessed, using standard Bayesian inference,
to improve the accuracy of those guesses.
\item[(c)] We show that a brute-force attack of type (b) above is computationally tractable only for very small datasets. Specifically, we prove
that the number of possible binary labelings of $n$ examples for which a candidate solution obtains a AUC score of $c$
grows exponentially in $n$, for every $c\in (0,1)$.
\end{itemize}
As the importance and prominence of data-mining competitions continue to increase, attackers will find more and more ingenious methods of
hacking them. The greater goal of this paper is to raise awareness of this potential danger.

\section{Related Work}
Our work is related to the problem of \emph{data leakage}, which is 
the inadvertent introduction of information about test data into the training dataset of data-mining competitions.
Leakage has been named one of the most important data-mining mistakes \cite{MinerEtAl2009}
and can be surprisingly insidious and difficult to identify and prevent \cite{KohaviEtAl2000,KaufmanEtAl2012}. Leakage has traditionally been
defined in a ``static'' sense, e.g., an artifact of the data preparation process that causes certain features
to reveal the target label with complete certainty. The exploitation of an AUC
oracle can be seen as a form of ``dynamic'' leakage: the oracle's AUC response during the competition
to a set of guesses submitted by a contestant can divulge the identity of 
particular test labels or at least constrain the set of possible labelings.

Our research also relates to privacy-preserving machine learning and differential privacy
(e.g., \cite{Dwork2011,ChaudhuriMonteleoni2009,BlumEtAl2013}), which are concerned with how to provide
useful aggregate statistics about a dataset -- e.g., the mean value of a particular attribute, or even a hyperplane to be used for classifying the data -- 
without disclosing private information about particular examples within the dataset.
\cite{StoddardEtAl2014}, for example, proposed an algorithm for computing ``private ROC'' curves and associated
AUC statistics.
The prior work most similar to ours is by \cite{Matthews2013}, who show how an attacker who already knows most of the test labels can
estimate the remaining labels if he/she gains access to an empirical ROC curve, i.e., 
a set of classifier thresholds and corresponding true positive and false positive rates. In a simulation on 100 samples, they show how a simple
Markov-chain Monte Carlo algorithm can recover the  remaining $10\%$ of missing test labels, with high accuracy, if $90\%$ of the test labels are already known.

\section{ROC, AUC, and 2AFC}
\label{sec:roc}
One of the most common ways to quantify the performance of a binary classifier 
is the Receiver Operating Characteristics (ROC) curve. Suppose a particular test set has ground-truth labels
$y_1,\ldots,y_n \in \{0,1\}$ and the classifier's guesses for these labels are $\hat{y}_1, \ldots, \hat{y}_n \in \mathbb{R}$.
For any fixed threshold $\theta \in \mathbb{R}$,
the false positive rate of these guesses is $\textrm{FP}(\theta) \doteq \frac{1}{n_0} \sum_{i\in\mathcal{Y}^-} \mathbb{I}[\hat{y}_i\geq\theta]$ and
the true positive rate is $\textrm{TP}(\theta) \doteq \frac{1}{n_1} \sum_{i\in\mathcal{Y}^+} \mathbb{I}[\hat{y}_i\geq\theta]$,
where $\mathbb{I}[\cdot] \in \{0,1\}$ is an indicator function, $\mathcal{Y}^- = \{ i: y_i=0 \}$ and
$\mathcal{Y}^+ = \{ i: y_i=1 \}$ are the index sets of the negatively and positively labeled examples, and $n_0 = |\mathcal{Y}^-|, n_1 = |\mathcal{Y}^+|$.
The ROC curve is constructed by plotting $(\textrm{FP}(\theta), \textrm{TP}(\theta))$ for all possible $\theta$. 
The Area Under the ROC Curve (AUC) is then the integral of the ROC curve over the interval $[0,1]$.


An alternative but equivalent interpretation of the AUC \cite{Tyler2000,AgarwalEtAl2005}, which we use in this paper, is that it is equal to the probability
of correct response in a 2-Alternative Forced-Choice (2AFC) task, whereby the classifier's real-valued outputs are used to discriminate between one positively
labeled and one negatively labeled example drawn from the set of all such pairs in the test set. If the AUC is $1$, then the classifier can discriminate
between a positive and a negative example perfectly (i.e., with probability $1$). A classifier that guesses randomly has AUC of $0.5$.
Using this probabilistic definition, given the ground-truth labels
$y_{1:n}\doteq y_1,\ldots,y_n$ and the classifier's real-valued guesses $\hat{y}_{1:n} \doteq \hat{y}_1, \ldots, \hat{y}_n$, we can define the 
function $f$ to compute the AUC of the guesses as:
\begin{eqnarray*}
\lefteqn{f(y_{1:n},\hat{y}_{1:n})} && \\
  &\doteq& \frac{1}{n_0 n_1} \sum_{i\in\mathcal{Y}^-} \sum_{j\in\mathcal{Y}^+} \mathbb{I}[\hat{y}_i < \hat{y}_j] + \frac{1}{2}\mathbb{I}[\hat{y}_i = \hat{y}_j]
\end{eqnarray*}
In this paper, we will assume that the classifier's guesses 
$\hat{y}_1,\ldots,\hat{y}_n$ are all distinct (i.e., $\hat{y}_i=\hat{y}_j \iff i=j$), which in many classification problems is common.
In this case, the formula above simplifies to:
\begin{eqnarray}
\label{eqn:auc}
f(y_{1:n},\hat{y}_{1:n}) &\doteq& \frac{1}{n_0 n_1} \sum_{i\in\mathcal{Y}^-} \sum_{j\in\mathcal{Y}^+} \mathbb{I}[\hat{y}_i < \hat{y}_j]
\end{eqnarray}
As is evident in Eq.~\ref{eqn:auc}, all that matters
to the AUC is the \emph{relative ordering} of the $\hat{y}_i$, not their exact values. 
Also, if all examples belong to the same class and either $n_1=0$ or $n_0=0$, then the AUC is undefined.
Finally, the AUC is a \emph{rational} number because it can be written as the fraction of two integers $p$ and $q$.
We use these facts later in the paper.

\section{Exploiting the AUC Score: A Simple Example}
As a first example of how knowing the AUC of a set of guesses can reveal the ground-truth test labels, consider
a hypothetical tiny test set of just 4 examples such that $y_i\in \{0,1\}$ is the ground-truth label for example $i \in \{1,2,3,4\}$.
Suppose a contestant estimates, through some machine learning process, that the probabilities that these examples belong to the positive class are
$\hat{y}_1=0.5$, $\hat{y}_2=0.6$, $\hat{y}_3=0.9$, and $\hat{y}_4=0.4$, and that the contestant submits these guesses to the oracle.
If the oracle replies that these guesses have an AUC of $0.75$, then the contestant can conclude with
complete certainty that the true solution is $y_1=1$, $y_2=0$, $y_3=1$, and $y_4=0$ because this is the \emph{only} ground-truth
labeling satisfying Eq.~\ref{eqn:auc} (see Table \ref{tbl:simple_example}).
The contestant can then revise his/her guesses based on the information returned by the oracle and receive a perfect score. Interestingly,
in this example, the fact that the AUC is $0.75$ is even more informative than if the AUC of the initial guesses were $1$, because the latter
is satisfied by three different ground-truth labelings.
\begin{table}
\begin{center}
\begin{tabular}{c|c|c|c||r}
\multicolumn{5}{c}{\bf AUC for different labelings}\\\hline
$y_1$ & $y_2$ & $y_3$ & $y_4$ & AUC \\ \hline
0 & 0 & 0 & 0 & -- \\
0 & 0 & 0 & 1 & $0.00$ \\
0 & 0 & 1 & 0 & $1.00$ \\
0 & 0 & 1 & 1 & $0.50$ \\
0 & 1 & 0 & 0 & $\approx 0.67$ \\
0 & 1 & 0 & 1 & $0.25$ \\
0 & 1 & 1 & 0 & $1.00$ \\
0 & 1 & 1 & 1 & $\approx 0.67$ \\
1 & 0 & 0 & 0 & $\approx 0.33$ \\
1 & 0 & 0 & 1 & $0.00$ \\
\cellcolor{blue!25}1 & \cellcolor{blue!25}0 & \cellcolor{blue!25}1 & \cellcolor{blue!25}0 & \cellcolor{blue!25}$0.75$ \\
1 & 0 & 1 & 1 & $\approx 0.33$ \\
1 & 1 & 0 & 0 & $0.50$ \\
1 & 1 & 0 & 1 & $0.00$ \\
1 & 1 & 1 & 0 & $1.00$ \\
1 & 1 & 1 & 1 & -- \\
\end{tabular}
\end{center}
\caption{Accuracy (AUC) achieved when a contestant's real-valued guesses of the test labels are
$\hat{y}_1=0.5, \hat{y}_2=0.6, \hat{y}_3=0.9, \hat{y}_4=0.4$, shown for each possible ground-truth labeling.
Only for one possible labeling (highlighted) do the contestant's guesses achieve AUC of exactly $0.75$.
}
\label{tbl:simple_example}
\end{table}

This simple example raises the question of whether knowledge of the AUC could be exploited in more general cases. In the next
sections we explore two possible attacks that a contestant might wage.


\section{Attack 1: Deducing Highest/Lowest-Ranked Labels with Certainty}
In this section we describe how a contestant, whose guesses are already very accurate (AUC close to $1$), can orchestrate an attack
to infer a few of the test set labels with complete certainty. This could be useful for several reasons:
(1) Even though the attacker's AUC is close to $1$, he/she may not know the actual test set labels -- see Table \ref{tbl:simple_example}
for an example. If the same test examples were ever re-used in a subsequent competition, then knowing their labels could be helpful.
(2) Once the attacker knows some of the test labels with certainty, he/she can now use these examples for \emph{training}.
This can be especially beneficial when the test set is drawn from a different population than the training set (e.g., different
set of human subjects' face images \cite{ValstarEtAl2011}).
(3) If multiple attackers all score a high AUC but have very different sets of guesses, then they could potentially collude to infer the labels
of a large number of test examples.

The attack requires rather strong prior knowledge of the exact number of positive and negative examples in the test set ($n_1$ and $n_0$,
respectively).
\begin{proposition}
Let $\mathcal{D}$ be a dataset with labels $y_1,\ldots,y_n$, of which $n_0$ are negative and $n_1$ are positive.
Let $\hat{y}_1,\ldots,\hat{y}_n$ be a contestant's real-valued guesses for the labels of $\mathcal{D}$
such that $\hat{y}_i=\hat{y}_j \iff i=j$. Let $c=f(y_{1:n}, \hat{y}_{1:n})$ denote the AUC achieved by the real-valued guesses
with respect to the ground-truth labels. For any positive integer $k \leq n_0$, if $c > 1-\frac{1}{n_1}+\frac{k}{n_0 n_1}$,
then the first $k$ examples according to the rank order of $\hat{y}_1,\ldots,\hat{y}_n$ must be negatively labeled. Similarly,
for any positive integer $k \leq n_1$, if $c > 1-\frac{1}{n_0}+\frac{k}{n_0 n_1}$,
then the last $k$ examples according to the rank order of $\hat{y}_1,\ldots,\hat{y}_n$ must be positively labeled.
\end{proposition}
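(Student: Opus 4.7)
The plan is to prove the contrapositive of the first claim: assuming that at least one of the $k$ examples with smallest $\hat{y}$ is positive, I would upper-bound the resulting AUC and show this bound falls strictly below $1 - \tfrac{1}{n_1} + \tfrac{k}{n_0 n_1}$, contradicting the hypothesis on $c$. The second claim is obtained by a completely symmetric argument, swapping positives with negatives and reversing the ordering (equivalently, negating every $\hat{y}_i$).

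For the first claim, I would recast Eq.~\ref{eqn:auc} as a sum indexed by positive examples. For each $j \in \mathcal{Y}^+$, let $N(j) \doteq |\{i \in \mathcal{Y}^- : \hat{y}_i < \hat{y}_j\}|$ count the negatives ranked below $j$. Then $n_0 n_1 \cdot c = \sum_{j \in \mathcal{Y}^+} N(j)$, with the trivial bound $N(j) \leq n_0$ for all $j$. The key observation is that if some positive example $j^\star$ has rank at most $k$ in the overall ordering of $\hat{y}_1,\ldots,\hat{y}_n$, then at most $k - 1$ examples lie strictly below it, so $N(j^\star) \leq k - 1$. Combining these bounds,
\[
n_0 n_1 \cdot c \;\leq\; (k-1) + (n_1 - 1)\, n_0,
\]
which rearranges to $c \leq 1 - \tfrac{1}{n_1} + \tfrac{k-1}{n_0 n_1}$, strictly less than the hypothesized $1 - \tfrac{1}{n_1} + \tfrac{k}{n_0 n_1}$; contradiction. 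So no positive can occupy any of the $k$ lowest-ranked positions.

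The second claim follows by the mirror-image counting: define $P(i) \doteq |\{j \in \mathcal{Y}^+ : \hat{y}_j > \hat{y}_i\}|$ for each $i \in \mathcal{Y}^-$, so that $n_0 n_1 \cdot c = \sum_{i \in \mathcal{Y}^-} P(i)$. If a negative example $i^\star$ occupies one of the top $k$ ranks, then $P(i^\star) \leq k - 1$, and the remaining $n_0 - 1$ negatives contribute at most $n_1$ each, giving $c \leq 1 - \tfrac{1}{n_0} + \tfrac{k-1}{n_0 n_1}$, again contradicting the hypothesis. There is no real obstacle here: once AUC is viewed as a count of correctly ordered pairs, the entire proof is a one-line pigeonhole bound. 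The only point to handle carefully is that ``rank at most $k$'' permits at most $k-1$ strictly smaller values, not $k$; this is exactly what makes the strict inequality in the hypothesis suffice and in fact leaves a small slack against the stated threshold.
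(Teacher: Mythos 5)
Your proof is correct and takes essentially the same approach as the paper: both argue by contradiction, bounding the number of correctly (equivalently, incorrectly) ordered positive--negative pairs when a positive example sits among the $k$ lowest-ranked guesses. Your per-example pigeonhole count even yields a marginally tighter bound ($c \leq 1 - \tfrac{1}{n_1} + \tfrac{k-1}{n_0 n_1}$ versus the paper's $1 - \tfrac{1}{n_1} + \tfrac{k}{n_0 n_1}$), but both suffice under the strict inequality in the hypothesis.
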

\begin{proof}
Without loss of generality,
suppose that the indices are arranged such that the $\hat{y}_i$ are sorted, i.e., $\hat{y}_1 < \ldots < \hat{y}_{n}$.
Suppose, by way of contradiction, that $m$ of the first $k$ examples were \emph{positively} labeled, where $1 \leq m \leq k$. For each such possible $m$,
we can find at least $m (n_0 - k)$ pairs that are misclassified according to the real-valued guesses
by pairing each of the $m$ positively labeled examples within
the index range $\{ i:\ 1\leq i \leq k\}$ with each of $(n_0 - k)$ negatively labeled examples within the index range $\{j:\ k+1 \leq j \leq n\}$.
In each such pair $(i,j)$, $y_i=1$ and $y_j=0$, and yet $\hat{y}_i < \hat{y}_j$, and thus the pair is misclassified. The minimum
number of misclassified pairs, over all $m$ in the range $\{1,\ldots, k\}$, is clearly $n_0 - k$ (for $m=1$).
Since there are $n_0 n_1$ total pairs in $\mathcal{D}$ consisting of one positive and one negative example, and since the AUC is maximized
when the number of misclassified pairs is minimized, then the maximum AUC that could be achieved
when $m\geq 1$ of the first $k$ examples are positively labeled is
\[
1 - \frac{n_0 - k}{n_0 n_1} = 1 - \frac{1}{n_1} + \frac{k}{n_0 n_1}
\]
But this is less than $c$. We must therefore conclude that $m=0$, i.e., all of the first $k$ examples are negatively labeled.

The proof is exactly analogous for the case when $c > 1-\frac{1}{n_0}+\frac{k}{n_0 n_1}$.
\end{proof}

\subsection{Example}
Suppose that a contestant's real-valued guesses $\hat{y}_1,\ldots,\hat{y}_n$ achieve an AUC of $c=0.985$
on a dataset containing $n_0=45$ negative and $n_1=55$ positive examples, and that $n_0$ and $n_1$ are known to him/her.
Then the contestant can conclude that the labels of the first (according to the rank order of $\hat{y}_1,\ldots,\hat{y}_n$) $7$ examples
\emph{must} be negative and the labels of the last $17$ examples \emph{must} be positive.

\section{Attack 2: Integration over Satisfying Solutions}
The second attack that we describe treats the AUC reported by the oracle as an observed random variable in a standard supervised learning
framework. In contrast to Attack 1, no prior knowledge of $n_0$ or $n_1$ is required.
Note that the attack we describe uses only a \emph{single} oracle query to improve an existing set of real-valued guesses. More sophisticated
attacks might conceivably refine the contestant's guesses in an iterative fashion using multiple queries.
\noindent {\bf Notation}: In this section only, capital letters denote random variables and lower-case letters represent instantiated values.

\begin{figure}
\begin{center}
\includegraphics[width=3.00in]{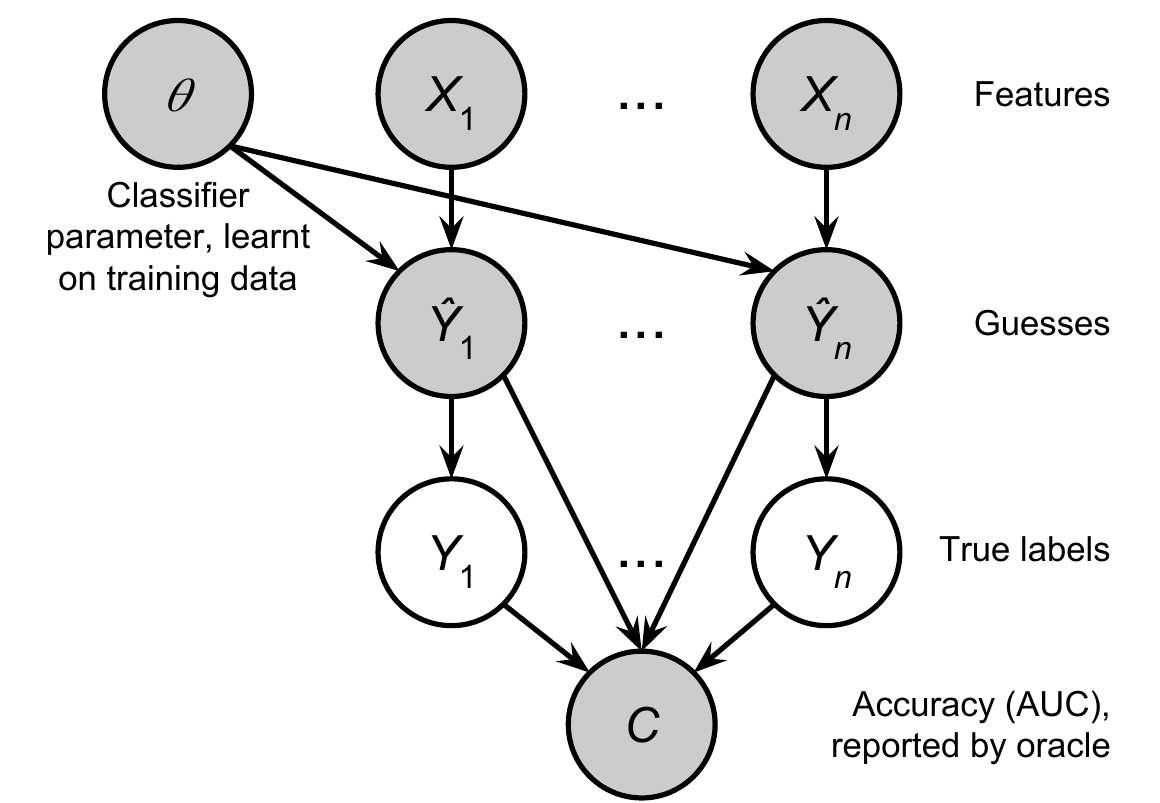}
\end{center}
\caption{Without node $C$, this graphical model shows a standard supervised learning problem: after estimating $\theta$ (on training data, not shown),
the test labels $Y_1,\ldots,Y_n$ can be estimated from feature vectors $X_1,\ldots,X_n$, and then submitted to the organizers of
the competition. Node $C$ represents the contestant's accuracy (AUC), which is often provided by an oracle and can be leveraged
to improve the guesses for the test set labels. Only the shaded variables are observed.
}
\label{fig:logistic_regression}
\end{figure}
Consider the graphical model of Fig.~\ref{fig:logistic_regression}, which depicts
a test set containing $n$ examples where each example $i$ is described by a vector $X_i\in\mathbb{R}^m$ of $m$ features (e.g., image pixels).
Under the model, each label $Y_i \in \{0,1\}$ is generated according to $P(Y_i=1\ |\ x_i, \theta) = g(x_i, \theta)$,
where $g: \mathbb{R}^m \times \mathbb{R}^m \rightarrow [0,1]$
could be, for example, the sigmoid function of logistic regression and $\theta \in \mathbb{R}^m$
is the classifier parameter. Note that this is a standard probabilistic discriminative
model -- the only difference is that we have created an intermediate variable $\hat{Y}_i \in [0,1]$ to represent
$g(x_i, \theta)$ explicitly for each $i$. Specifically, we define:
\begin{eqnarray*}
P(\hat{y}_i\ |\ x_i, \theta) &=& \delta(\hat{y}_i - g(x_i, \theta))\\
P(Y_i=1\ |\ \hat{y}_i) &=& \hat{y}_i
\end{eqnarray*}
where $\delta$ is the Dirac delta function.

The classification parameter $\theta$ can be estimated
from training data (not shown), and thus we consider it to be observed. Using $X_{1:n}$ and an estimate for $\theta$,
the contestant can then compute $\hat{Y}_{1:n}$ and submit these as his/her guesses to the competition organizers.
The question is: if the competition allows access to an oracle that reports $C$ (i.e., variable $C$ is observed), how can this additional
information be used? Since the AUC $C$ is a deterministic function (Eq.~\ref{eqn:auc}) of $y_{1:n}$ and $\hat{y}_{1:n}$,
we have:
\[
P(c\ |\ y_{1:n}, \hat{y}_{1:n}) = \delta(f(y_{1:n}, \hat{y}_{1:n}) - c)
\]
Then, according to standard Bayesian inference, the contestant can use $C$ to update
his/her posterior estimate of each label $Y_i$ as follows:
\begin{eqnarray}
\nonumber \lefteqn{P(y_i \ |\ \hat{y}_{1:n}, c)} &&\\
\nonumber  &=& \sum_{y_{\neg i}} P(y_{1:n}\ |\ \hat{y}_{1:n}, c) \\
\nonumber   && \textrm{where $y_{\neg i}$ refers to $y_1,\ldots,y_{i-1},y_{i+1},\ldots,y_n$.} \\
\nonumber   &=& \sum_{y_{\neg i}} \frac{P(c\ |\ y_{1:n}, \hat{y}_{1:n}) P(y_{1:n}\ |\ \hat{y}_{1:n})}
                             {P(c\ |\ \hat{y}_{1:n})} \\
\nonumber   &\propto& \sum_{y_{\neg i}} \left[ P(c\ |\ y_{1:n}, \hat{y}_{1:n}) \prod_{j} P(y_j\ |\ \hat{y}_j) \right]\\
\nonumber   &=& \sum_{y_{\neg i}} \delta(f(y_{1:n}, \hat{y}_{1:n}) - c) \prod_{j} P(y_j\ |\ \hat{y}_j) \\
            &=& \sum_{\substack{\text{$y_{\neg i}:$}\\\text{$f(y_{1:n},\hat{y}_{1:n}) = c$}}} \prod_{j} P(y_j\ |\ \hat{y}_j)
				   \label{eqn:attack2}
\end{eqnarray}
In other words, to compute the (unnormalized) posterior probability that $Y_i=y_i$ given $C=c$,
simply find all label assignments to the \emph{other} variables $Y_1,\ldots,Y_{i-1},Y_{i+1},\ldots,Y_n$
such that the AUC is $c$, and then compute the sum of the likelihoods $\prod_j P(y_j\ |\ \hat{y}_j)$ over all such assignments.

\subsection{Simulation}
As a proof-of-concept of the algorithm described above, we conducted a simulation 
on a tiny dataset of $n=16$ examples. In particular, we let $g(x,\theta)=(1+\exp(-\theta^\top x))^{-1}$
(sigmoid function for logistic regression), and we sampled $\theta$ and $X_1,\ldots,X_n$
from an $m$-dimensional Normal distribution with zero mean and diagonal unit covariance.
In our simulations, the contestant does not know the value of $\theta$ but can estimate it from a training dataset containing $k$ examples
(with $L_2$ regularization of $1$). In each simulation,
the contestant computes $\hat{Y}_{1:n}$ from its estimate of $\theta$ and the feature vectors $X_{1:n}$. The contestant then submits $\hat{Y}_{1:n}$ as guesses
to the oracle, receives the AUC score $C$, and then computes $P(Y_1=1\ |\ \hat{y}_{1:n}, c), \ldots, P(Y_n=1\ |\ \hat{y}_{1:n}, c)$. The contestant then submits
these posterior probabilities as its \emph{second} set of guesses and receives an updated AUC score $C'$. After each simulation run, we record the accuracy gain
$C'-C$. By varying $k\in\{1,\ldots,20\}$, $m\in\{4,5,\ldots,16\}$, and averaging over $50$ simulation runs per $(m,k)$ combination, we can then compute the expected
accuracy gain $\Delta AUC$ (i.e., $C'-C$) as a function of the initial AUC ($C$).

\begin{figure}
\begin{center}
\includegraphics[width=3.3in]{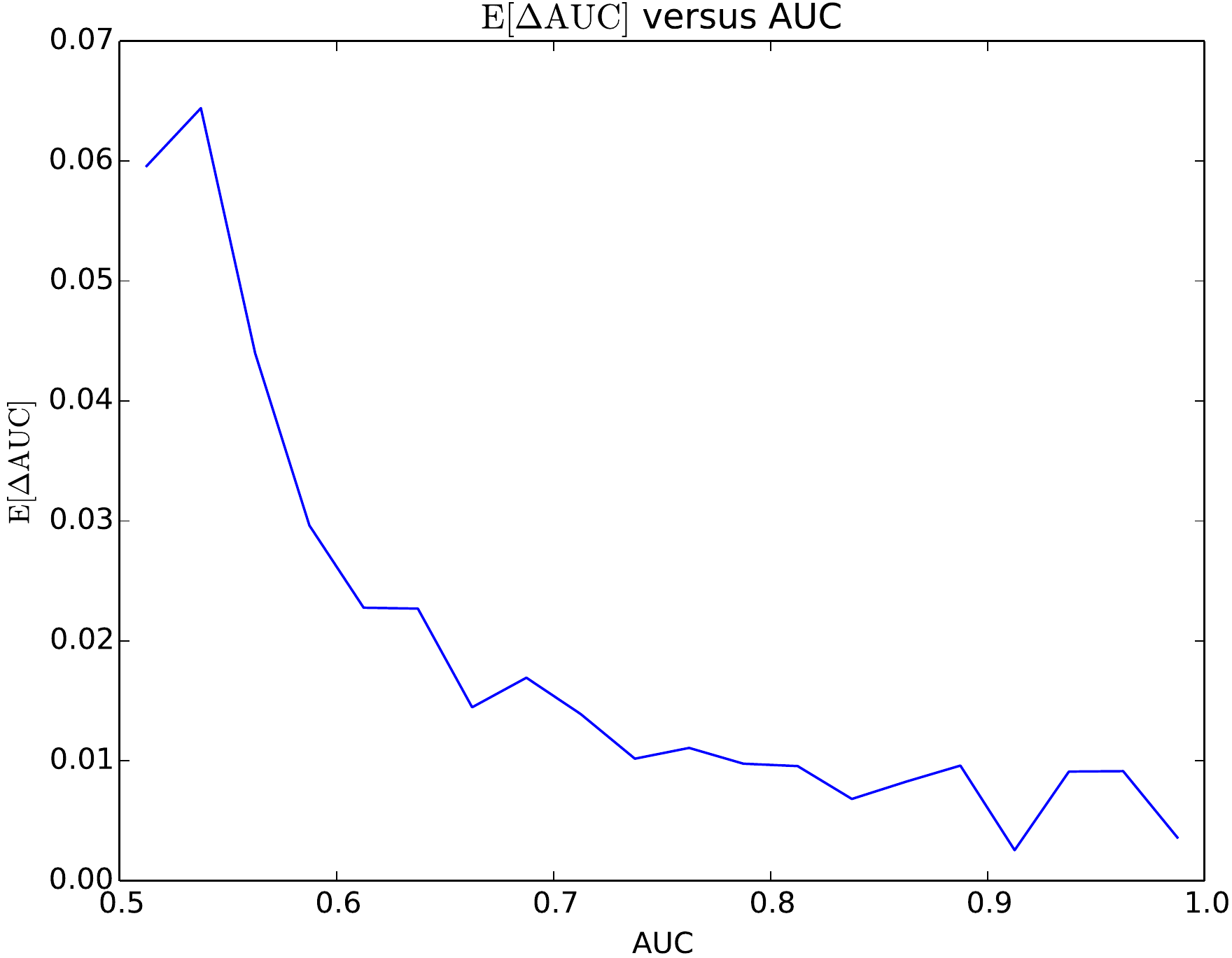}
\end{center}
\caption{Tiny simulation of how exploiting knowledge of the AUC of a set of guesses can improve
accuracy, as a function of existing accuracy.}
\label{fig:simulation}
\end{figure}
{\bf Results}: The graph in Fig.~\ref{fig:simulation} indicates that, for a wide range of starting AUC values
$C$, a small but worthwhile average increase in accuracy can be achieved, particularly when $C$ is closer to $0.5$.

\subsection{Tractability}
In the simulation above,
we used a brute-force approach when solving Eq.~\ref{eqn:attack2}: we created a list of all $2^n$ possible binary tuples $(y_1,\ldots,y_n)$
and then simply selected those tuples that satisfied $f(y_{1:n}, \hat{y}_{1:n})=c$.
However, if the number of such tuples were small, and if one could efficiently enumerate over them, then the attack 
would become much more practical.
This raises an important question: for a dataset of size $n$ and a fixed set of
real-valued guesses $\hat{y}_1,\ldots,\hat{y}_n$, are there certain AUC values for which  the number of possible
binary labelings is sub-exponential in $n$? We investigate this question in the next section.

\section{The Number of Satisfying Solutions Grows Exponentially in $n$ for Every AUC $c\in (0,1)$}
\begin{figure*}
\begin{center}
\includegraphics[width=6in]{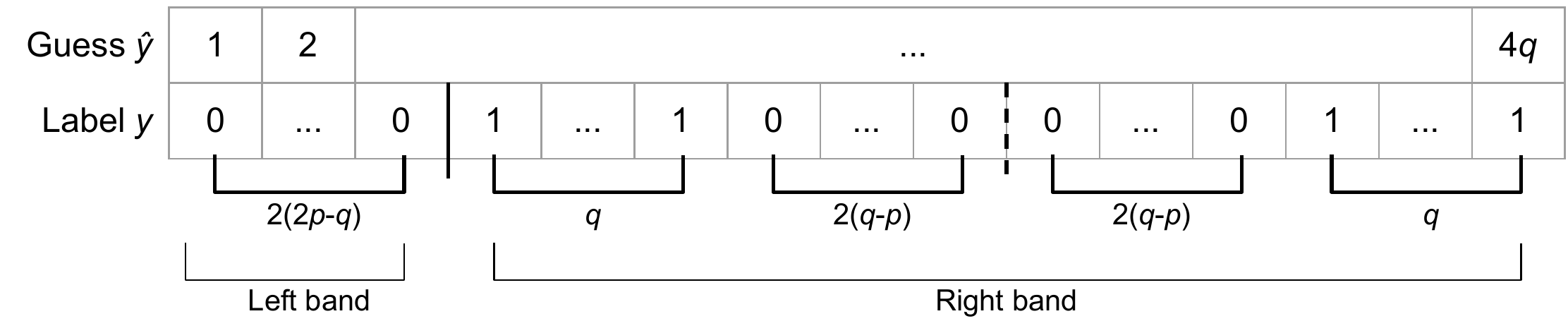}
\end{center}
\caption{Construction of a binary labeling for which the AUC is $c$, for any $c=\frac{p}{q}$ such that $0.5 \leq c < 1$.}
\label{fig:construction}
\end{figure*}
In this section we prove that the number of tuples $(y_1,\ldots,y_n)$ for which a contestant's guesses
achieve a fixed AUC $c$ grows exponentially in $n$, for every $c\in (0,1)$. (Note that this is different from proving that,
for a dataset of some fixed size $n$, the number of tuples $(y_1,\ldots,y_n)$ for which a contestant's
guesses achieve some AUC $c$ is exponential in $n$.)
Our proof is by construction: for any AUC $c=\frac{p}{q},\ p,q\in\mathbb{Z}^+,\ p<q$, 
we show how to construct a dataset of size $n=4q$ such that the number of satisfying binary labelings is at least $\left(2 - 2\left|c-0.5\right|\right)^{n/4}$.
Intuitively, this lower-bound grows more quickly for AUC values close to 0.5 than for AUC values close to 0 or 1.

First, we prove a simple lemma:

\begin{lemma}
\label{lemma}
Let $a,b,c \in\mathbb{Z}$ such that $a>b> 0$ and $c\geq 0$. Then $\frac{a+c}{b+c} \leq \frac{a}{b}$.
\end{lemma}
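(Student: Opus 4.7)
The plan is to reduce the inequality to an equivalent polynomial statement by clearing denominators. Since $b > 0$ and $c \geq 0$, both $b$ and $b+c$ are strictly positive, so multiplying through by $b(b+c)$ preserves the direction of the inequality. The claim $\frac{a+c}{b+c} \leq \frac{a}{b}$ then becomes $b(a+c) \leq a(b+c)$.

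Next I would expand both sides. After cancellation of the common $ab$ term, the inequality collapses to $bc \leq ac$, i.e., $c(a-b) \geq 0$. At this point the hypotheses do all the work: $c \geq 0$ by assumption and $a - b > 0$ since $a > b$, so the product is nonnegative, completing the proof.

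There is no real obstacle here; the only thing worth being careful about is ensuring that $b+c > 0$ so that cross-multiplying is legitimate, but this is immediate from $b > 0$ and $c \geq 0$. (The integrality hypothesis is not actually used — the same argument works for reals — but I would not comment on that beyond noting it is stated for the way the lemma is later applied.)
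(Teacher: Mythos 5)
Your proof is correct and uses essentially the same algebra as the paper's: cross-multiply by the positive quantity $b(b+c)$, cancel the common $ab$ term, and reduce to comparing $bc$ with $ac$. The only difference is presentational — the paper argues by contradiction (assuming the reverse strict inequality and deriving $b>a$), while you argue directly, which if anything handles the $c=0$ edge case a bit more cleanly.
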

\begin{proof}
By way of contradiction, suppose $\frac{a+c}{b+c} > \frac{a}{b}$. Then
\begin{eqnarray*}
(a+c)b &>& a(b+c) \\
ab+bc  &>& ab + ac \\
bc     &>& ac \\
\end{eqnarray*}
which implies $b>a$.
\end{proof}

\begin{proposition}
Let $\mathcal{D}$ be a dataset consisting of $n=4q$ examples for some $q\in\mathbb{Z}^+$, and
let $\hat{y}_1,\ldots,\hat{y}_n$ be a contestant's real-valued guesses for the binary labels of $\mathcal{D}$
such that $\hat{y}_i=\hat{y}_j \iff i=j$. Then
for any AUC $c$ such that $c=\frac{p}{q}, p,q\in \mathbb{Z}^+$ and $p<q$, the number of distinct binary labelings $y_1,\ldots,y_n$ 
for which $f(y_{1:n},\hat{y}_{1:n})=c$ is at least $\left(2 - 2\left|c-0.5\right|\right)^{n/4}$.
\end{proposition}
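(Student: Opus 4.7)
The plan is to prove the proposition by an explicit block construction. Two quick reductions: (i) the map $y_i \mapsto 1-y_i$ is a bijection on labelings that sends AUC $c$ to $1-c$, so I can assume $c \in [1/2, 1)$, on which $2 - 2|c - 1/2| = 3 - 2c$; and (ii) because AUC depends on $\hat{y}_1, \ldots, \hat{y}_n$ only through their rank order, I may take $\hat{y}_i = i$.

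Partition the $4q$ sorted positions into $q$ consecutive blocks of length $4$. The combinatorial heart of the argument is the local observation that, within a single block, the two patterns $(-,+,+,-)$ and $(+,-,-,+)$ both contain exactly two misordered pos--neg pairs and share the same marginal class counts. Therefore, swapping between them inside any block alters neither the within-block nor any between-block inversion total, so each such ``flexible'' block contributes an independent factor of $2$ to the labeling count. My proposed labeling places $a$ all-negative blocks at the low end of $\hat{y}$, $\ell$ flexible blocks in the middle, and $b$ all-positive blocks at the high end, with $a + \ell + b = q$. A direct count yields $2\ell + 4\binom{\ell}{2} = 2\ell^2$ inverted pos--neg pairs out of $n_0 n_1 = (4a + 2\ell)(4b + 2\ell)$. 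The objective is then to select the triple $(a, \ell, b)$ so that $2\ell^2/(n_0 n_1) = 1 - c$, which would produce $2^\ell$ labelings of AUC exactly $c$.

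The main obstacle is integrality: the real-valued solutions $\ell = 4q(q-p)/(3q-2p)$ (at $a = 0$) and $\ell = \sqrt{2q(q-p)}$ (at $a = b$) need not be integers for arbitrary $p, q$. My plan is to treat $(a, b)$, with $a + b = q - \ell$, as an integer tuning knob: stepping $a$ shifts $(4a + 2\ell)(4b + 2\ell)$ monotonically through a dense sequence of integer values in $[\,2\ell(4q - 2\ell),\,4q^2\,]$, and I claim that for any integer $\ell$ in $[\,4q(1-c)/(3-2c),\,q\sqrt{2(1-c)}\,]$ some admissible $(a,b)$ hits the target AUC exactly. Lemma~\ref{lemma} enters here to bound the unit-step change in the ratio $2\ell^2/[(4a+2\ell)(4b+2\ell)]$, guaranteeing the integer adjustment cannot skip past $1-c$. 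Finally, an elementary calculus inequality, $2(s-1)\ln 2/s \geq \ln s$ for $s = 3 - 2c \in [1, 2]$ (with equality at both endpoints), yields $2^\ell \geq 2^{4q(1-c)/(3-2c)} \geq (3-2c)^q = (2 - 2|c - 1/2|)^{n/4}$, completing the proof.
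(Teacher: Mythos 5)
There is a genuine gap at the step where you claim that some admissible $(a,b)$ makes the AUC hit $c$ exactly. Your block construction can only realize AUCs of the form $1 - 2\ell^2/\bigl((4a+2\ell)(4b+2\ell)\bigr)$, and with $u = 4a+2\ell$ the product $(4a+2\ell)(4b+2\ell) = u(4q-u)$ ranges over a \emph{sparse, discrete} set as $u$ steps through an arithmetic progression of step $4$. Showing that the target value $2\ell^2/(1-c)$ lies between the smallest and largest achievable products (which is what your interval $[\,4q(1-c)/(3-2c),\,q\sqrt{2(1-c)}\,]$ encodes) does not show that it is \emph{attained}: a discrete sequence can, and generically does, skip over the exact target, and no bound on the unit-step change (via Lemma~\ref{lemma} or otherwise) can rule this out, since you need equality, not proximity. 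Concretely, take $q=3$, $p=2$, $c=2/3$, $n=12$. Exhausting all triples $(a,\ell,b)$ with $a+\ell+b=3$, your family realizes only the AUC values $\{1/2,\ 3/4,\ 9/10,\ 17/18,\ 1\}$; the value $2/3$ is unattainable, so your construction produces zero labelings where the proposition demands at least $(5/3)^3 > 4$. The rest of your argument (the complementation reduction, the inversion count $2\ell^2$, and the closing inequality $2^{4q(1-c)/(3-2c)} \geq (3-2c)^{q}$) checks out, but it rests on this unproved and in fact false attainability claim.

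The paper avoids this quantization problem by making the band sizes themselves explicit linear functions of $p$ and $q$: for $1/2 \leq c < 1$ it places $2(2p-q)$ all-negative examples at the bottom, followed by a single ``right band'' of length $2(3q-2p)$ containing $2q$ positives and $4(q-p)$ negatives arranged symmetrically about its center. The symmetry pairs each misordered positive--negative pair inside the band with a correctly ordered one, so \emph{every} symmetric arrangement has exactly $4pq$ correct pairs out of $4q^2$, i.e.\ AUC exactly $p/q$ — exact attainment is built in rather than hoped for. The count of symmetric arrangements is $\binom{3q-2p}{q} \geq (3-2c)^{n/4}$ by Lemma~\ref{lemma}. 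If you want to salvage your blockwise approach, you would need to augment the three block types with enough additional patterns (or a final non-uniform correction band) to close the gap between the sparse set of achievable inversion counts and an arbitrary target $4q(q-p)$; as it stands the proof does not go through.
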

\begin{proof}
Without loss of generality,
suppose that the indices are arranged such that the $\hat{y}_i$ are sorted, i.e., $\hat{y}_1 < \ldots < \hat{y}_{n}$. Since the
AUC is invariant under monotonic transformations of the real-valued guesses, we can represent each $\hat{y}_i$ simply by its
index $i$. Since $c$ is a fraction of pairs that are correctly classified, we can write it as $p/q$ for positive integers $p,q$.
We will handle the cases $c \geq 0.5$ and $c< 0.5$ separately.

{\bf Case 1 ($0.5 \leq c < 1$)}:
Construct a dataset of size $n=4q$ as shown in Fig.~\ref{fig:construction}: the first $2(2p - q)$ entries are negative
examples, and of the remaining $2(3q - 2p)$ entries (which we call the ``right band''), $2q$ are positive and $4(q-p)$ are negative. Moreover, the right band
is arranged \emph{symmetrically} in the following sense: for each $i\in\{2(2p-q)+1,\ldots,4q\}$, $y_i=1\iff y_j=1$ where
$j=2(2p-q) + (4q-i+1)$.

Given this construction, we must calculate how many pairs containing one positive and one negative example are
correctly and incorrectly classified. Note that each of the first $2(2p-q)$ negative examples in the left band can be paired with each of the
$2q$ positive examples in the right band, and that each of these positive
examples has a higher $\hat{y}$ value than the negative examples; hence, each of these $2q(2)(2p-q)=8pq-4q^2$ pairs is
classified correctly. The only remaining pairs of examples occur within the right band. To calculate the number of correctly/incorrectly
classified pairs in the right band, we exploit the fact that it is symmetric:
For any index pair $(i,j)$ where $i,j\in\{2(2p-q)+1,\ldots,4q\}$, where $y_i=0$ and $y_j=1$, and where $i<j$ (and hence
$\hat{y}_i < \hat{y}_j$), we can find exactly one other pair $(i',j')$ for which
$y_{i'}=0$ and $y_{j'}=1$ and for which $i'>j'$. Hence, within the right band,
the numbers of correctly and incorrectly
classified pairs are equal. Since there are $2q(4)(q-p)=8q^2 - 8pq$ pairs within this band total, then $4q^2 -4pq$
are classified correctly and $4q^2-4pq$ are classified incorrectly.

Summing the pairs of examples within the right band and the pairs between the left and right bands, we have
$8pq-4q^2 + 8q^2 - 8pq = 4q^2$ pairs total. The number of correctly classified pairs is
$4q^2 - 4pq + 8pq - 4q^2 = 4pq$, and thus the AUC is $4pq/(4q^2) = p/q =c$, as desired.

Now that we have constructed a \emph{single} labeling of size $n=4q$ for which the AUC is $c$, we can construct many more
simply by varying the positions of the $2q$ positive entries within the right band of $2(3q-2p)$ entries.
To preserve symmetry, we can vary the positions of half of the positive examples within half of the right band and then simply ``reflect'' the positions onto
the other half. In total, the number of choices is:
\begin{eqnarray*}
\lefteqn{{3q-2p \choose q}} &&\\
  &=& \frac{(3q-2p)!}{q!(3q-2p-q)!} \\
  &=& \frac{(3q-2p)!}{q!(2q-2p)!} \\
  &=& \frac{(3q-2p)(3q-2p-1)\cdots(2q-2p+2)(2q-2p+1)}{q(q - 1) \cdots (2)(1)} \\
\end{eqnarray*}
We now apply Lemma \ref{lemma} and the fact that the numerator and denominator both contain $q$ terms:
\begin{eqnarray*}
\lefteqn{{3q-2p \choose q}} &&\\
  &\geq& \left( \frac{3q-2p}{q} \right)^{q} \\
  &=& \left( \frac{3q-2p}{q} \right)^{n/4} \\
  &=& \left( 3 - 2c \right)^{n/4} \\
\end{eqnarray*}

{\bf Case 2 ($0<c<0.5$)}: The proof is analogous except that we ``flip'' the AUC around $0.5$ and correspondingly ``flip''
the labels left-to-right. Let $r=q-p$ (so that $r/q = 1 - p/q$). Then, we form a similar construction as above,
except that the left sequence of all negative examples
is moved to the right. Specifically, we create a symmetric sequence of length $2(3q-2r)$ such that $2q$ examples are positive and
$4(q-r)$ are negative. We then append $2(2r-q)$ entries to the right that are all negative. This results in
\[
\frac{1}{2}\left(2q \times 4(q-r)\right) = 4q^2 - 4qr
\]
pairs that are classified correctly and
\begin{eqnarray*}
\lefteqn{2(2r-q) \times 2q + \frac{1}{2}\left(2q \times 4(q-r)\right)} && \\
  &=& 8qr - 4q^2 + 4q^2 - 4qr\\
  &=& 4qr
\end{eqnarray*}
pairs that are classified incorrectly. In total, there are $4qr + 4q^2 - 4qr = 4q^2$ pairs, so that the AUC
is $(4q^2 - 4qr) / 4q^2 = 1 - r/q = p/q$, as desired.

Analogously to above, we can form
\begin{eqnarray*}
{3q-2r \choose q} &=& \frac{(3q-2r)!}{q!(2q-2r)!} \\
  &\geq& \left( \frac{3q-2r}{q} \right)^{n/4} \\
  &=& \left( \frac{3q-2(q-p)}{q} \right)^{n/4} \\
  &=& \left( \frac{q+2p}{q} \right)^{n/4} \\
  &=& \left( 1+2c \right)^{n/4} \\
\end{eqnarray*}
symmetric constructions of the left band.

Combining Case 1 and Case 2, we find that the number of binary labelings for which the AUC is $c$ is at least
\[
\left(2 - 2\left|c-0.5\right|\right)^{n/4}
\]
for all $0<c<1$.
\end{proof}

\section{Conclusion}
In this paper we have examined properties of the Area Under the ROC Curve (AUC) that can enable a contestant of a data-mining
contest to exploit an oracle that reports AUC scores to illegitimately attain higher performance.
We presented two simple attacks: one whereby a contestant whose guesses already achieve high accuracy can infer, with
complete certainty, the values of a few of the test set labels; and another whereby a contestant can harness the oracle's AUC information
to improve his/her guesses using standard Bayesian inference. To our knowledge, our paper is the first to formally investigate these kinds of attacks.

The practical implications of our work are mixed: On the one hand, we have provided proofs-of-concept that systematic exploitation
of AUC oracles is possible, which underlines the importance of taking simple safeguards such as (a) adding noise to the output of the oracle,
(b) limiting the number of times that a contestant may query the oracle, and (c) not re-using test examples across competitions.
On the other hand, we also provided evidence -- in the form of a proof  that the number of binary labelings for which a set of guesses
attains an AUC of $c$
grows exponentially in the test set size $n$ -- that brute-force probabilistic inference to improve one's guesses is intractable
except for tiny datasets. It is conceivable that some approximate inference algorithms might overcome this obstacle.

As data-mining contests continue to grow in number and importance, it is likely that more creative exploitation -- e.g.,
attacks that harness multiple oracle queries instead of just single queries --
will be attempted.
It is even possible that the focus of effort in such contests might shift from developing effective machine learning classifiers
to querying the oracle strategically, without training a useful classifier at all.
With our paper we hope to highlight an important potential problem in the machine learning community.

\bibliographystyle{alpha}
\bibliography{paper}

\newcommand{\etalchar}[1]{$^{#1}$}
\begin{thebibliography}{{KDD}15}

\bibitem[AGH{\etalchar{+}}05]{AgarwalEtAl2005}
Shivani Agarwal, Thore Graepel, Ralf Herbrich, Sariel Har-Peled, and Dan Roth.
\newblock Generalization bounds for the area under the {ROC} curve.
\newblock In {\em Journal of Machine Learning Research}, pages 393--425, 2005.

\bibitem[BLR13]{BlumEtAl2013}
Avrim Blum, Katrina Ligett, and Aaron Roth.
\newblock A learning theory approach to noninteractive database privacy.
\newblock {\em Journal of the ACM (JACM)}, 60(2):12, 2013.

\bibitem[CM09]{ChaudhuriMonteleoni2009}
Kamalika Chaudhuri and Claire Monteleoni.
\newblock Privacy-preserving logistic regression.
\newblock In {\em Advances in Neural Information Processing Systems}, pages
  289--296, 2009.

\bibitem[Dwo11]{Dwork2011}
Cynthia Dwork.
\newblock Differential privacy.
\newblock In HenkC.A. van Tilborg and Sushil Jajodia, editors, {\em
  Encyclopedia of Cryptography and Security}, pages 338--340. Springer US,
  2011.

\bibitem[KBF{\etalchar{+}}00]{KohaviEtAl2000}
Ron Kohavi, Carla~E Brodley, Brian Frasca, Llew Mason, and Zijian Zheng.
\newblock Kdd-cup 2000 organizers' report: Peeling the onion.
\newblock {\em ACM SIGKDD Explorations Newsletter}, 2(2):86--93, 2000.

\bibitem[{KDD}15]{KDDCup2015}
{KDD Cup Organizers}.
\newblock \url{https://www.kddcup2015.com/submission-rank.html}, 2015.

\bibitem[KRPS12]{KaufmanEtAl2012}
Shachar Kaufman, Saharon Rosset, Claudia Perlich, and Ori Stitelman.
\newblock Leakage in data mining: Formulation, detection, and avoidance.
\newblock {\em ACM Transactions on Knowledge Discovery from Data (TKDD)},
  6(4):15, 2012.

\bibitem[MH13]{Matthews2013}
Gregory~J Matthews and Ofer Harel.
\newblock An examination of data confidentiality and disclosure issues related
  to publication of empirical roc curves.
\newblock {\em Academic radiology}, 20(7):889--896, 2013.

\bibitem[MNEI09]{MinerEtAl2009}
Gary Miner, Robert Nisbet, and John Elder~IV.
\newblock {\em Handbook of statistical analysis and data mining applications}.
\newblock Academic Press, 2009.

\bibitem[RDS{\etalchar{+}}15]{ImageNet2015}
Olga Russakovsky, Jia Deng, Hao Su, Jonathan Krause, Sanjeev Satheesh, Sean Ma,
  Zhiheng Huang, Andrej Karpathy, Aditya Khosla, Michael Bernstein,
  Alexander~C. Berg, and Li~Fei-Fei.
\newblock {ImageNet Large Scale Visual Recognition Challenge}.
\newblock {\em International Journal of Computer Vision (IJCV)}, pages 1--42,
  April 2015.

\bibitem[SCM14]{StoddardEtAl2014}
Ben Stoddard, Yan Chen, and Ashwin Machanavajjhala.
\newblock Differentially private algorithms for empirical machine learning.
\newblock {\em CoRR}, abs/1411.5428, 2014.

\bibitem[Sim15]{Simonite2015}
Tom Simonite.
\newblock Why and how {B}aidu cheated an artificial intelligence test.
\newblock {\em {MIT} Technology Review}, 6 2015.

\bibitem[TC00]{Tyler2000}
C.~Tyler and C.-C. Chen.
\newblock Signal detection theory in the 2{AFC} paradigm: attention, channel
  uncertainty and probability summation.
\newblock {\em Vision Research}, 40(22):3121--3144, 2000.

\bibitem[VGA{\etalchar{+}}15]{ValstarEtAl2015}
M~Valstar, J~Girard, T~Almaev, Gary McKeown, Marc Mehu, Lijun Yin, Maja Pantic,
  and J~Cohn.
\newblock Fera 2015-second facial expression recognition and analysis
  challenge.
\newblock {\em Proc. IEEE ICFG}, 2015.

\bibitem[VJM{\etalchar{+}}11]{ValstarEtAl2011}
Michel~F. Valstar, Bihan Jiang, Marc M\'{e}hu, Maja Pantic, and Klaus Scherer.
\newblock The first facial expression recognition and analysis challenge.
\newblock In {\em Proceedings of the IEEE International Conference on Automatic
  Face and Gesture Recognition}, 2011.

\end{thebibliography}

\end{document}